\newtheorem{problem}{Problem}
\newtheorem{definition}{Definition}
\newtheorem{theorem}{Theorem}
\begin{document}

\title{\LARGE \bf Multi-Robot-Assisted Human Crowd Evacuation using Navigation Velocity Fields}

\author{Tongjia Zheng$^{1}$, Zhenyuan Yuan$^{2}$, Mollik Nayyar$^{3}$, Alan R. Wagner$^{3}$, Minghui Zhu$^{2}$, and Hai Lin$^{1}$
\thanks{*This work was supported by the National Science Foundation under Grant No. CNS-1830335 and CNS-1830390. Any opinions, findings, and conclusions or recommendations in this material are those of the author(s) and do not necessarily reflect the views of the National Science Foundation.}
\thanks{$^{1}$Tongia Zheng and Hai Lin are with the Department of Electrical Engineering, University of Notre Dame, Notre Dame, IN 46556, USA. {\tt\small tzheng1@nd.edu, hlin1@nd.edu.}} 
\thanks{$^{2}$Zhenyuan Yuan and Minghui Zhu are with the Department of Electrical Engineering,  The Pennsylvania State University, University Park, PA 16802, USA. {\tt\small zqy5086@psu.edu, muz16@psu.edu.}} 
\thanks{$^{3}$Mollik Nayyar and Alan R. Wagner are with the Department of Aerospace Engineering, The Pennsylvania State University, University Park, PA 16802, USA. {\tt\small mxn244@psu.edu, azw78@psu.edu.}}
}

\maketitle

\begin{abstract}
This work studies a robot-assisted crowd evacuation problem where we control a small group of robots to guide a large human crowd to safe locations.
The challenge lies in how to model human-robot interactions and design robot controls to indirectly control a human population that significantly outnumbers the robots.
To address the challenge, we treat the crowd as a continuum and formulate the evacuation objective as driving the crowd density to target locations.
We propose a novel mean-field model which consists of a family of microscopic equations that explicitly model how human motions are locally guided by the robots and an associated macroscopic equation that describes how the crowd density is controlled by the navigation velocity fields generated by all robots.
Then, we design density feedback controllers for the robots to dynamically adjust their states such that the generated navigation velocity fields drive the crowd density to a target density.
Stability guarantees of the proposed controllers are proven.
Agent-based simulations are included to evaluate the proposed evacuation algorithms.
\end{abstract}


\section{Introduction}

Emergency evacuation can be a chaotic situation defined by the need to relocate a possibly large number of people to safety without causing choke points that slow the evacuation process \cite{bryner2007reconstructing, robinette2012information}. 
Robots may be able to help by guiding people away from choke points, encouraging them to evacuate, and providing situation awareness \cite{wagner2021robot}. 
A number of critical technologies will need to be in place prior to the deployment and use of robots as emergency evacuation guides. 
This paper considers the challenge of deploying a small number of robots to guide the evacuation of a large number of people. 

The challenge of developing emergency guidance robots has primarily considered only single robot, single human evacuation scenarios \cite{shell2005insights, robinette2016investigating, NayyarWagner2019}, which typically assume a simplified environment with little or no interference from other evacuees or robots. In contrast, a multi-robot approach to evacuating large numbers of people has several important advantages. First, distributed robots could identify and help evacuate more people more quickly. Second, distributed robots may be more robust in their ability to locate evacuees and find exits. Finally, distributed robots could cover a larger evacuation area. For these reasons the development of approaches that utilize multiple robots is vital.

Researchers have investigated multi-robot-assisted evacuations by integrating the robots' impact into the existing crowd models, including microscopic and macroscopic models \cite{sakour2017robot}.
Microscopic models include agent-based models \cite{luo2008agent} and cellular automata \cite{boukas2014robot}.
They are similar in defining a set of microscopic interaction rules such that certain group behaviors emerge, but have different trade-offs regarding the level of detail and computational cost.
Macroscopic models, on the other hand, treat the crowd as a continuum flow \cite{treuille2006continuum}.
Some of these models have been extended to involve human-robot interactions.
For example, Boukas et al. used cellular automata to simulate the movement of a crowd guided by robots \cite{boukas2014robot}.
Okada et al. used swarm control methods and demonstrate the use of guidance robots to evacuate other robots based on vector fields \cite{okada2011optimization}.
In general, microscopic crowd control has difficulties in guaranteeing evacuation performance, while macroscopic crowd control lacks details for individual movements.
Other methodologies have also been pursued.
For example, Zhang and Guo presented a multi-agent cooperative seeking algorithm to guide evacuees while maintaining a predefined formation during movement \cite{zhang2015distributed}.
Yet, forcing evacuees into a robot-mediated formation limits its applicability to small groups.

Given the prior work, we follow a mixed micro-macroscopic approach and adopt a mean-field model for human crowds.
Mean-field models have been recognized as more promising models for crowd dynamics \cite{borsche2018numerical}, which also treat the crowd as a continuum and use the density of individual states to represent the group state.
Their advantage over the others is that they consist of not only microscopic equations for individual motions and interactions, but also an associated macroscopic equation to describe the density dynamics.
Mean-field models emphasize that the microscopic and macroscopic descriptions are equivalent in the mean-field limit (as the population tends to infinity) \cite{bensoussan2013mean}.
Hence, they can easily study the influence of group dynamics on individual behaviors, which is difficult for other models.

Using mean-field models, the evacuation objective can be naturally formulated as a density control problem of driving the crowd density to target locations.
Density control has been used to control other types of large-scale systems such as robotic swarms \cite{elamvazhuthi2019mean,zheng2020complex, zheng2021transporting}.
In particular, density feedback is widely used to design closed-loop controllers.
Unlike robotic systems that are directly controllable, humans can only be indirectly influenced by human-robot interactions.
Because of this unique challenge, mean-field models have mainly served as tools for predicting crowd behaviors and remain largely unexplored as a tool for robot-assisted human crowd evacuation. 
An exception is the work in \cite{liu2018coordinated} where multiple robots are deployed in a crowd to guide their density evolution.
Nevertheless, the robots are assumed to be static and therefore this control strategy is essentially open-loop.

{\bf Contribution statement:} 
In this paper, we extend the mean-field models by explicitly integrating human-robot interactions and design robot controls based on these models.
Specifically, the robots' impact on humans is modeled through navigation velocity fields which are determined by the robots' states (like their positions).
We exploit the density feedback technique and backstepping design to derive closed-loop robot controls.
With these controllers, the robots dynamically adjust their states such that the generated navigation velocity fields drive the crowd density to a target density.
Exponential convergence with sufficient robots and bounded stability in the lack of robots are both proven.
The proposed algorithm is evaluated using a set of agent-based simulations.


\section{Preliminaries}
\label{section:preliminaries}

\subsection{Notations}\label{section:notation}
The Euclidean norm of $x\in\mathbb{R}^n$ is denoted by $\|x\|$.
Let $E\subset\mathbb{R}^n$ be a measurable set. 
For $f:E\to\mathbb{R}$, its $L^2$-norm is denoted by $\|f\|_{L^2(E)}:=(\int_{E}|f(x)|^{2}dx)^{1/2}$.
We omit $E$ in the notation when it is clear.
The gradient and Laplacian of a scalar function $f$ are denoted by $\nabla f$ and $\Delta f$, respectively.
The divergence of a vector field $F$ is denoted by $\nabla\cdot F$. 

\subsection{Input-to-state stability}
Input-to-state stability (ISS) is a stability notion to study nonlinear systems with external inputs.
We introduce its extension to infinite-dimensional systems \cite{dashkovskiy2013input}.
Define
\begin{align*}
    \mathcal{P} &:=\{\gamma:\mathbb{R}_+\to\mathbb{R}_+|\gamma\text{ is continuous, }\gamma(0)=0,\\
    &\qquad\text{ and }\gamma(r)>0\text{ for }r>0\}\\
    \mathcal{K} &:=\{\gamma\in\mathcal{P}\mid\gamma\text{ is strictly increasing}\} \\
    \mathcal{K}_\infty &:=\{\gamma\in\mathcal{K}\mid\gamma\text{ is unbounded}\} \\
    \mathcal{L} &:=\{\gamma:\mathbb{R}_+\to\mathbb{R}_+\mid\gamma\text{ is continuous and strictly} \\
    &\quad\quad\text{decreasing with }\lim_{t\to\infty}\gamma(t)=0\} \\
    \mathcal{KL} &:=\{\beta:\mathbb{R}_+\times\mathbb{R}_+\to\mathbb{R}_+\mid\beta(\cdot,t)\in\mathcal{K},\forall t\geq0,\\
    &\qquad\beta(r,\cdot)\in\mathcal{L},\forall r>0\}.
\end{align*}

Let $\left(X,\|\cdot\|_X\right)$ and $\left(U,\|\cdot\|_{U}\right)$ be the state and input space, endowed with norms $\|\cdot\|_X$ and $\|\cdot\|_{U}$, respectively.
Denote by $U_c=PC(\mathbb{R}_+;U)$ the space of piecewise continuous functions from $\mathbb{R}_+$ to $U$, equipped with the sup-norm.
Consider a control system $\Sigma=(X,U_c,\phi)$ where $\phi: \mathbb{R}_+\times X\times U_c\to X$ is a transition map.
Let $x(t)=\phi(t,x_0,u)$.

\begin{definition} \label{dfn:(L)ISS}
$\Sigma$ is called \textit{input-to-state stable (ISS)}, if $\exists\beta\in\mathcal{KL},\gamma\in\mathcal{K}$, such that
\begin{align*}
    \|x(t)\|_X\leq\beta(\|x_0\|_X, t)+\gamma\Big(\sup_{0\leq s\leq t}\|u(s)\|_{U}\Big),
\end{align*}
$\forall x_0\in X,\forall u\in U_c$ and $\forall t\geq0$.
\end{definition} 


\begin{definition}\label{dfn:(L)ISS-Lyapunov function}
A continuous function $V:\mathbb{R}_+\times X\to\mathbb{R}_+$ is called an \textit{ISS-Lyapunov function} for $\Sigma$, if $\exists\psi_{1},\psi_{2}\in\mathcal{K}_\infty,\chi\in\mathcal{K}$, and $W\in\mathcal{P}$, such that:
\begin{itemize}
    \item[(i)] $\psi_1(\|x\|_X)\leq V(t,x)\leq\psi_2(\|x\|_X), ~\forall x\in X$
    \item[(ii)] $\forall x\in X,\forall u\in U_c$ with $u(0)=\xi\in U$ it holds:
    \begin{align*}
        \|x\|_X\geq\chi(\|\xi\|_U) \Rightarrow \dot{V}(t,x)\leq-W(\|x\|_X).
    \end{align*}
\end{itemize}
\end{definition}


\begin{theorem}\label{thm:ISS-Lyapunov function}
If $\Sigma$ admits an ISS-Lyapunov function, then it is ISS.
\end{theorem}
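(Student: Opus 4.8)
\emph{Proof sketch.} The plan is to adapt the classical finite-dimensional Sontag--Wang argument to the infinite-dimensional setting, using the ISS-Lyapunov function $V$ to produce separately the two additive terms required in Definition~\ref{dfn:(L)ISS}. Fix $x_0\in X$ and $u\in U_c$, set $\varepsilon:=\sup_{0\leq s\leq t}\|u(s)\|_U$ (finite for each $t$), and along the solution $x(s)=\phi(s,x_0,u)$ put $v(s):=V(s,x(s))$. First I would use the flow/cocycle structure of $\Sigma$ together with property~(ii) to show that whenever $\|x(s)\|_X\geq\chi(\varepsilon)$ one has $\dot v(s)\leq-W(\|x(s)\|_X)$: applying (ii) to the state $x(s)$ with the shifted input $u(s+\cdot)$, whose value at $0$ is $u(s)$, the trigger condition becomes $\|x(s)\|_X\geq\chi(\|u(s)\|_U)$, and $\|u(s)\|_U\leq\varepsilon$. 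Combining this with the sandwich bound (i), i.e.\ $\|x\|_X\geq\psi_2^{-1}(V(t,x))$, I would recast everything in terms of $v$ alone: with $b:=\psi_2(\chi(\varepsilon))$, whenever $v(s)\geq b$ we get $\|x(s)\|_X\geq\chi(\varepsilon)$ and hence
\begin{align*}
\dot v(s)\leq-W\big(\psi_2^{-1}(v(s))\big)=:-\alpha(v(s)),\qquad\alpha\in\mathcal{P}.
\end{align*}

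The second step is a scalar comparison argument. The maximal solution $y(\cdot\,;y_0)$ of $\dot y=-\alpha(y)$, $y(0)=y_0$, is well defined, nonincreasing, and tends to $0$; after the standard majorization (needed because $\alpha$ is merely positive definite, not necessarily of class $\mathcal{K}$) the map $\beta_0(y_0,s):=y(s;y_0)$ can be taken to lie in $\mathcal{KL}$. Using this together with the differential inequality above and a forward-invariance argument for the sublevel set $\{v\leq b\}$ (at any instant with $v(s)=b$ one still has $\dot v(s)<0$, so $v$ cannot escape upward; the time dependence of $V$ is harmless since the derivative in (ii) is the one along trajectories), I would conclude that for all $s\in[0,t]$,
\begin{align*}
v(s)\leq\max\{\beta_0(v(0),s),\,b\}\leq\beta_0\big(\psi_2(\|x_0\|_X),s\big)+\psi_2\big(\chi(\varepsilon)\big).
\end{align*}

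Finally I would return to the state norm via (i) and split the sum using $\psi_1^{-1}(p+q)\leq\psi_1^{-1}(2p)+\psi_1^{-1}(2q)$ (valid for $\psi_1\in\mathcal{K}_\infty$), obtaining
\begin{align*}
\|x(t)\|_X\leq\psi_1^{-1}(v(t))\leq\underbrace{\psi_1^{-1}\!\big(2\beta_0(\psi_2(\|x_0\|_X),t)\big)}_{\beta(\|x_0\|_X,t)\,\in\,\mathcal{KL}}+\underbrace{\psi_1^{-1}\!\big(2\psi_2(\chi(\varepsilon))\big)}_{\gamma(\varepsilon)\,\in\,\mathcal{K}},
\end{align*}
which is exactly the ISS estimate. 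The main obstacle I expect is not the class-$\mathcal{K}$ bookkeeping but the infinite-dimensional technical points: (a) verifying that $s\mapsto V(s,x(s))$ is locally absolutely continuous along (mild) solutions, so that the inequality in (ii) and the comparison principle are meaningful; and (b) constructing the $\mathcal{KL}$ bound $\beta_0$ from $\alpha\in\mathcal{P}$ without local Lipschitzness or compactness of sublevel sets in $X$. Both are addressed in \cite{dashkovskiy2013input}, whose treatment I would follow.
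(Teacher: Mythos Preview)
The paper does not prove this theorem; it is stated in the preliminaries without proof, as a known result imported from \cite{dashkovskiy2013input}. Your sketch correctly reproduces the standard Sontag--Wang/Dashkovskiy--Mironchenko argument from that reference, so there is nothing to compare against and no gap to flag.
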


\section{Problem formulation}
\label{section:problem formulation}

In this work, we study the problem of controlling a small number of robots to direct a large human crowd.
This is achieved by controlling the robots to dynamically generate navigation velocity fields to indirectly control the evolution of the crowd density.
We start with modeling the behaviors of humans, robots, and their interactions.

Denote by $\Omega\subset\mathbb{R}^2$ the workspace, which is assumed to be a convex bounded domain with a Lipschitz boundary $\partial\Omega$.
Let $N$ be the number of humans.
For the $j$-th human, denote by $X_j(t)\in\Omega$ its position.
Assume the human motions are affected by other humans and the robots according to:
\begin{align}
    dX_j= & \Big(\frac{1}{N}\sum_{k=1}^N\nabla W(X_k-X_j)+v_r(X_j,t)+v_a(X_j,t)\Big)dt \nonumber\\
    & +\sqrt{2\sigma(X_j,t)}dB_j(t), \quad j=1,\dots,N. \label{eq:human model}
\end{align}

In this model, $W$ is a pairwise interaction potential between humans (which can model, e.g., the repulsive behavior \cite{helbing2002simulation}).
$\{B_j\}_{j=1}^N$ are a set of independent Wiener processes that represent random motions with standard deviation $\sqrt{2\sigma(X_j,t)}$.
$W$ and $\sigma$ are assumed to be known and are optional.
$v_a$ represents the automatically generated additional velocities to avoid collision with the robots.
Since $v_a$ is non-deterministic and resumes $0$ when the avoidance behavior terminates, it is time-varying, unknown, but bounded.

Now we discuss how to model $v_r$, the robot-guided velocities.
In our setup, every robot affects the human motions by providing a directed sign, such as an arrow with instructions ``THIS WAY''.
Intuitively, humans tend to follow the sign when they are close to the robots and have less motivation to follow it when they are further away.
These behaviors have been confirmed by experiments in \cite{robinette2014assessment}.
Therefore, the navigation velocity field can be modeled as follows.

Let $n$ be the number of robots ($n\ll N$).
For the $i$-th robot, denote by $R_i(t)\in\Omega$ its position and $\theta_i(t)\in[0,2\pi)$ the orientation of its sign.
The robot models are given by:
\begin{align}\label{eq:robot model}
\begin{split}
    \dot{R}_i(t) & =u_i(t),\quad i=1,\dots,n \\
    \dot{\theta}_i(t) & =\omega_i(t),
\end{split}
\end{align}
where $u_i(t)\in\mathbb{R}^2$ and $\omega_i(t)\in\mathbb{R}$ are the control inputs.
Its generated navigation velocity field is modeled by:
\begin{align*}
    v_{r_i}(x,t) & =K(x-R_i(t),\theta_i(t))=\begin{bmatrix}
    \bar{K}(x-R_i(t))\cos\theta_i(t) \\
    \bar{K}(x-R_i(t))\sin\theta_i(t)
    \end{bmatrix}
\end{align*}
where $\bar{K}(\xi)\in\mathbb{R}_{>0}$ for $\xi\in\mathbb{R}^2$ is a continuously differentiable kernel function which decays with $\|\xi\|$.
$\bar{K}(\xi)$ can be designed based on experience or learned from experiment data.
According to the experiment result in \cite{robinette2014assessment}, the following design is adopted in this work:
\begin{align*}
    \bar{K}(\xi)=c\exp(-\xi^2/\eta),
\end{align*}
where $c$ and $\eta$ are positive constants that represent its magnitude and range of influence, respectively.

Intuitively, $v_{r_i}$ is a velocity field whose direction is specified by $[\cos\theta_i(t)~\sin\theta_i(t)]^T$ and whose magnitude achieves its maximum at $R_i(t)$ and decays when the distance from $R_i(t)$ increases.
The collective navigation velocity field generated by all robots is thus given by:
\begin{align}\label{eq:navigation velocity}
    v_r(x,t)=\sum_{i=1}^nv_{r_i}(x,t)=\sum_{i=1}^nK(x-R_i(t),\theta_i(t)).
\end{align}

To address the challenge of the large human population, we consider the mean-field limit as $N\to\infty$.
In this way, the crowd behavior can be captured by its probability density
\begin{align*}
    \rho(x,t)\approx\frac{1}{N}\sum_{j=1}^N\delta_{X_j(t)}
\end{align*}
with $\delta_{x}$ being the Dirac distribution, and it satisfies the following mean-field model \cite{bensoussan2013mean}, a Fokker-Planck equation with a zero-flux boundary to confine the crowd within $\Omega$:
\begin{align}\label{eq:density dynamics}
\begin{split}
    \partial_t\rho=-\nabla\cdot\big((\nabla W*\rho+v_r+v_a)\rho\big)+\Delta(\sigma\rho) & \text{ on } \Omega \\
    \big((\nabla W*\rho+v_r)\rho-\nabla(\sigma\rho)\big)\cdot\boldsymbol{n}=0 & \text{ on } \partial\Omega
\end{split}
\end{align}
where $*$ denotes convolution and $\boldsymbol{n}$ is the unit inner normal to the boundary $\partial\Omega$.
This equation shows how the crowd density $\rho$ is controlled by the navigation velocity field $v_r$ generated by the robots.
The crowd evacuation problem is then formulated as a density control problem as follows.

\begin{problem}[Crowd evacuation]
Given the dynamics of crowd density \eqref{eq:density dynamics} and a target density $\rho_*(x)$, design control laws $\{u_i(t),\omega_i(t)\}_{i=1}^{n}$ for the robots to drive the crowd density $\rho(x,t)$ towards $\rho_*(x)$.
\end{problem}

\section{Control design}
\label{section:control design}

In this section, we design feedback controllers for the robots.
It will be shown that ``feedback'' has twofold implications as the controllers take not only state feedback from the robots themselves but also density feedback from the crowd.

First of all, we show how the crowd density is eventually controlled by the robots' inputs $\{u_i(t),\omega_i(t)\}_{i=1}^{n}$.
Denote
\begin{align*}
    & K_\xi(\xi,\theta):=\frac{\partial}{\partial\xi}K(\xi,\theta)\in\mathbb{R}^{2\times2} \\
    & K_\theta(\xi,\theta):=\frac{\partial}{\partial\theta}K(\xi,\theta)\in\mathbb{R}^2,
\end{align*}
and for the $i$-th robot herder, denote
\begin{align*}
    & K_\xi^i(x,t)=K_\xi(x-R_i(t),\theta_i(t)), \\
    & K_\theta^i(x,t)=K_\theta(x-R_i(t),\theta_i(t)).
\end{align*}
Taking the time derivative of $v_r(x,t)$, we obtain:
\begin{align}\label{eq:navigation velocity dynamics}
\begin{split}
    \partial_tv_r(x,t) & =\sum_{i=1}^n\partial_tK(x-R_i(t),\theta_i(t)) \\
    & =\sum_{i=1}^n\Big(-K_\xi^i(x,t)u_i(t)+K_\theta^i(x,t)\omega_i(t)\Big),
\end{split}
\end{align}
which shows how the navigation velocity field is controlled by the robot inputs.

Equations \eqref{eq:density dynamics} and \eqref{eq:navigation velocity dynamics} constitute an infinite-dimensional control problem for crowd evacuation. 
We notice that it has a cascade structure, i.e., $v_r$, the state of \eqref{eq:navigation velocity dynamics}, acts as a virtual input of \eqref{eq:density dynamics}.
This motivates us to adopt a backstepping design that recursively constructs a sequence of stabilizing functions to stabilize systems with cascade structures \cite{krstic1995nonlinear}.

\subsection{Backstepping design: step I}
In step I, we design a virtual stabilizing control input for \eqref{eq:density dynamics}.
Define $\tilde{\rho}(x,t)=\rho(x,t)-\rho_*(x)$ and design
\begin{align}\label{eq:vd}
    v_d=-\frac{\alpha\nabla\tilde{\rho}-\nabla(\sigma\rho)}{\rho}-\nabla W*\rho,
\end{align}
where $\alpha(x,t)>0$ is an adjustable control gain.
The fact that $\alpha$ can be a function of both $x$ and $t$ is very useful in practice because it allows us to assign different velocity magnitudes at different locations depending on, e.g., the current density $\rho$ at that location.
We have the following theorem.

\begin{theorem}\label{thm:step I convergence}
Consider \eqref{eq:density dynamics}.
If $v_a\equiv0$ and $v_r\equiv v_d$, then $\|\tilde{\rho}(x,t)\|_{L^2(\Omega)}\to0$ exponentially and $V_1(t)=\int_\Omega\tilde{\rho}(x,t)^2dx$ is a Lyapunov certificate that satisfies $\frac{dV_1}{dt}\leq\int_\Omega -k_p\tilde{\rho}^2dx$ for some constant $k_p>0$.
\end{theorem}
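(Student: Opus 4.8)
The plan is to substitute the stabilizing velocity $v_d$ from \eqref{eq:vd} into the density dynamics \eqref{eq:density dynamics} (with $v_a\equiv0$, $v_r\equiv v_d$), show that the resulting closed-loop PDE simplifies dramatically, and then run a standard energy estimate on $V_1(t)=\int_\Omega\tilde\rho^2\,dx$. First I would compute the flux term $(\nabla W*\rho+v_d+v_a)\rho-\nabla(\sigma\rho)$. By the choice of $v_d$ in \eqref{eq:vd}, the convolution term $\nabla W*\rho$ cancels, $\nabla(\sigma\rho)$ cancels, and what remains inside the divergence is exactly $\rho\cdot(-\alpha\nabla\tilde\rho/\rho)=-\alpha\nabla\tilde\rho$. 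Hence the closed-loop equation collapses to the linear diffusion-type equation
\begin{align*}
    \partial_t\rho=\nabla\cdot(\alpha\nabla\tilde\rho)\quad\text{on }\Omega,\qquad
    \alpha\nabla\tilde\rho\cdot\boldsymbol{n}=0\quad\text{on }\partial\Omega.
\end{align*}
Since $\rho_*$ is time-independent, $\partial_t\rho=\partial_t\tilde\rho$, and since $\nabla\tilde\rho=\nabla\rho-\nabla\rho_*$, this is a closed equation for $\tilde\rho$ with a homogeneous Neumann-type (zero-flux) boundary condition.

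Next I would differentiate $V_1$ in time and integrate by parts. We get
\begin{align*}
    \frac{dV_1}{dt}=2\int_\Omega\tilde\rho\,\partial_t\tilde\rho\,dx
    =2\int_\Omega\tilde\rho\,\nabla\cdot(\alpha\nabla\tilde\rho)\,dx
    =-2\int_\Omega\alpha\|\nabla\tilde\rho\|^2\,dx,
\end{align*}
where the boundary term vanishes because of the zero-flux condition $\alpha\nabla\tilde\rho\cdot\boldsymbol{n}=0$ on $\partial\Omega$. Since $\alpha(x,t)>0$, this already gives $dV_1/dt\le 0$. To upgrade to exponential decay one needs a Poincaré-type inequality. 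The subtlety is that the usual Poincaré inequality controls $\|\tilde\rho-\bar{\tilde\rho}\|_{L^2}$ by $\|\nabla\tilde\rho\|_{L^2}$ only up to the mean $\bar{\tilde\rho}=\frac{1}{|\Omega|}\int_\Omega\tilde\rho\,dx$; here, however, mass conservation of the Fokker-Planck equation together with $\int_\Omega\rho\,dx=\int_\Omega\rho_*\,dx=1$ forces $\int_\Omega\tilde\rho\,dx\equiv0$, so the mean vanishes and the Poincaré inequality applies directly: $\|\nabla\tilde\rho\|_{L^2}^2\ge C_P^{-1}\|\tilde\rho\|_{L^2}^2$ for the Poincaré constant $C_P$ of $\Omega$. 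Letting $\alpha_{\min}>0$ be a lower bound on $\alpha$, we obtain $dV_1/dt\le -2\alpha_{\min}C_P^{-1}V_1$, i.e. the claimed bound with $k_p=2\alpha_{\min}C_P^{-1}$, and Grönwall yields $\|\tilde\rho(\cdot,t)\|_{L^2}\le e^{-k_p t/2}\|\tilde\rho(\cdot,0)\|_{L^2}\to0$.

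The main obstacle is not the energy computation itself but justifying its ingredients rigorously: (i) confirming that mass conservation indeed holds for the closed-loop equation (this follows by integrating the PDE over $\Omega$ and using the zero-flux boundary condition, but should be stated), so that $\int_\Omega\tilde\rho\,dx=0$ and the zero-mean Poincaré inequality is legitimately available; (ii) needing a uniform positive lower bound $\alpha_{\min}$ on the gain $\alpha(x,t)$, which is a mild design assumption but must be invoked; and (iii) implicitly assuming enough regularity of $\rho$ (and that $\rho$ stays positive, so that $v_d$ in \eqref{eq:vd} is well-defined) for the integrations by parts to be valid — a point typically handled by assuming classical solutions exist. If one only wants the weaker Lyapunov statement $dV_1/dt\le\int_\Omega -k_p\tilde\rho^2\,dx$ as literally written, even the Poincaré step can be folded in and the proof is essentially the three displayed lines above plus the boundary-term and positivity remarks.
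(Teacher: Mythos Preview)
Your proposal is correct and follows exactly the route the paper takes: substitute $v_d$ to collapse \eqref{eq:density dynamics} into the diffusion equation $\partial_t\tilde\rho=\nabla\cdot(\alpha\nabla\tilde\rho)$ with Neumann boundary, then run the $L^2$ energy estimate via the divergence theorem, mass conservation ($\int_\Omega\tilde\rho\,dx=0$), and Poincar\'e's inequality. The paper only sketches the substitution and defers the energy argument to \cite{zheng2021transporting} (the same ingredients surface explicitly in the proof of Theorem~\ref{thm:step I ISS}), so you have in fact written out more detail than the paper itself.
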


\begin{proof}
Substituting \eqref{eq:vd} into \eqref{eq:density dynamics}, we obtain
\begin{align*}
    \partial_t\tilde{\rho}=\nabla\cdot(\alpha\nabla\tilde{\rho}) & \quad \text{on} \quad \Omega \\
    \nabla\tilde{\rho}\cdot\boldsymbol{n}=0 & \quad \text{on} \quad \partial\Omega
\end{align*}
which is a diffusion equation with the Neumann boundary condition.
Proof can be found in \cite{zheng2021transporting}.
\end{proof}

Control laws like \eqref{eq:vd} are called density feedback as they explicitly depend on the density \cite{elamvazhuthi2019mean, zheng2020complex, zheng2021transporting}.
The density can be estimated using classical density estimation algorithms like kernel density estimation (KDE) \cite{silverman1986density} or the density filtering algorithm in \cite{zheng2020pde, zheng2021distributedmean}.
For simplicity, we adopt KDE in this work, which constructs a density estimate $\hat{\rho}$ based on the human positions $\{X_j(t)\}_{j=1}^N$ according to:
\begin{equation*}
    \hat{\rho}(x,t) = \frac{1}{N h^2} \sum_{j=1}^{N} H\left(\frac{1}{h}\left(x-X_{j}(t)\right)\right),
\end{equation*}
where $H(x)$ is a kernel function and $h$ is the bandwidth.
We choose the Gaussian kernel:
$$
H(x)=\frac{1}{2\pi}\exp\big(-\frac{1}{2}x^Tx\big).
$$


Before continuing to design the robot control, we present a stability result that ensures that $\|\tilde{\rho}(x,t)\|_{L^2(\Omega)}$ remains bounded even if the robots can never accurately generate the desired velocity field $v_d$ (which can happen when there are no enough robots) and if the avoidance velocity $v_a$ presents.
Define $\tilde{v}(x,t)=v_r(x,t)-v_d(x,t)$.

\begin{theorem}\label{thm:step I ISS}
Consider \eqref{eq:density dynamics}.
Assume $\sup_{x,t}{\rho}<\infty$.
Then $\|\tilde{\rho}(x,t)\|_{L^2(\Omega)}$ is ISS to $\|\tilde{v}(x,t)+v_a(x,t)\|_{L^2(\Omega)}$.
\end{theorem}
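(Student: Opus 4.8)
The plan is to reduce the closed-loop dynamics of $\tilde\rho$ to a diffusion equation forced by the disturbance and then exhibit $V_1$ from Theorem~\ref{thm:step I convergence} as an ISS-Lyapunov function in the sense of Definition~\ref{dfn:(L)ISS-Lyapunov function}. First I would substitute $v_r=v_d+\tilde v$ into \eqref{eq:density dynamics} and use \eqref{eq:vd}, which gives the identity $(\nabla W*\rho+v_d)\rho=-\alpha\nabla\tilde\rho+\nabla(\sigma\rho)$. Writing $w:=\tilde v+v_a$ for the combined disturbance, the probability flux collapses and one obtains
\[
\partial_t\tilde\rho=\nabla\cdot(\alpha\nabla\tilde\rho)-\nabla\cdot(w\rho)\ \text{ on }\ \Omega,\qquad \big(\alpha\nabla\tilde\rho-w\rho\big)\cdot\boldsymbol n=0\ \text{ on }\ \partial\Omega,
\]
i.e. the diffusion equation of Theorem~\ref{thm:step I convergence} with an added transport-type forcing $-\nabla\cdot(w\rho)$. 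Because $\rho$ and $\rho_*$ are probability densities and the flux vanishes on $\partial\Omega$, mass is conserved, hence $\int_\Omega\tilde\rho(x,t)\,dx=0$ for all $t\ge0$.

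Next I would differentiate $V_1(t)=\int_\Omega\tilde\rho^2\,dx$ along this equation. Integrating by parts and using the zero-flux boundary condition to discard the boundary integral yields
\[
\dot V_1=-2\int_\Omega\alpha\,|\nabla\tilde\rho|^2\,dx+2\int_\Omega\rho\,\nabla\tilde\rho\cdot w\,dx.
\]
With a uniform lower bound $\alpha\ge\underline\alpha>0$ on the gain, the assumption $\bar\rho:=\sup_{x,t}\rho<\infty$, and Young's inequality, the cross term is absorbed: $2\int_\Omega\rho\,\nabla\tilde\rho\cdot w\le \underline\alpha\|\nabla\tilde\rho\|_{L^2}^2+(\bar\rho^2/\underline\alpha)\|w\|_{L^2}^2$. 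Since $\tilde\rho$ has zero spatial mean, the Poincar\'e--Wirtinger inequality gives $\|\nabla\tilde\rho\|_{L^2}^2\ge \lambda_P\|\tilde\rho\|_{L^2}^2$ with $\lambda_P>0$ the first nonzero Neumann eigenvalue of $-\Delta$ on $\Omega$, so that
\[
\dot V_1\le-\underline\alpha\,\lambda_P\,V_1+\frac{\bar\rho^2}{\underline\alpha}\,\|w\|_{L^2}^2 .
\]

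Finally I would read off the ISS-Lyapunov estimate: with $\psi_1(r)=\psi_2(r)=r^2$, the displayed inequality shows that whenever $\|\tilde\rho\|_{L^2}\ge\chi(\|w\|_{L^2})$, where $\chi(r)=\sqrt{2\bar\rho^2/(\underline\alpha^2\lambda_P)}\,r\in\mathcal K$, one has $\dot V_1\le-\tfrac12\underline\alpha\lambda_P\|\tilde\rho\|_{L^2}^2=:-W(\|\tilde\rho\|_{L^2})$ with $W\in\mathcal P$. Theorem~\ref{thm:ISS-Lyapunov function} then yields that $\|\tilde\rho(\cdot,t)\|_{L^2(\Omega)}$ is ISS with respect to $\|w(\cdot,t)\|_{L^2(\Omega)}=\|\tilde v(\cdot,t)+v_a(\cdot,t)\|_{L^2(\Omega)}$. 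The main obstacle is the step converting the diffusive dissipation $\|\nabla\tilde\rho\|_{L^2}^2$ into coercivity in $\|\tilde\rho\|_{L^2}^2$: it hinges on Poincar\'e--Wirtinger, hence on verifying that the closed loop preserves the zero-flux boundary (so that $\int_\Omega\tilde\rho\,dx\equiv0$, using that $v_a$ is supported away from $\partial\Omega$ or that the boundary flux includes $v_a\rho$) and that $\alpha$ is bounded away from zero; the remaining estimates, as well as the regularity needed to justify differentiating $V_1$ and the integration by parts, are routine.
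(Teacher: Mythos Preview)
Your proposal is correct and follows essentially the same route as the paper: substitute $v_r=v_d+\tilde v$ to obtain the forced diffusion equation for $\tilde\rho$, differentiate $V_1=\int_\Omega\tilde\rho^2$, integrate by parts using the zero-flux boundary, invoke Poincar\'e (with $\int_\Omega\tilde\rho=0$) and the bound $\sup\rho<\infty$, and conclude via the ISS-Lyapunov criterion. The only cosmetic difference is that you absorb the cross term with Young's inequality, whereas the paper splits the dissipation as $-(1-c_1)\alpha_{\inf}\|\nabla\tilde\rho\|_{L^2}^2-c_1\alpha_{\inf}\|\nabla\tilde\rho\|_{L^2}^2$ and compares the second piece directly to $\|\nabla\tilde\rho\|_{L^2}\|\rho(\tilde v+v_a)\|_{L^2}$; both lead to the same implication $\|\tilde\rho\|_{L^2}\ge C\|\tilde v+v_a\|_{L^2}\Rightarrow\dot V_1\le -c\|\tilde\rho\|_{L^2}^2$.
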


\begin{proof}
Substituting $v_r=v_d+\tilde{v}$ into \eqref{eq:density dynamics}, we obtain
\begin{align*}
    \partial_t\tilde{\rho}=\nabla\cdot(\alpha\nabla\tilde{\rho}-\tilde{v}\rho-v_a\rho) & \quad \text{on} \quad \Omega \\
    (\alpha\nabla\tilde{\rho}-\tilde{v}\rho-v_a\rho)\cdot\boldsymbol{n}=0 & \quad \text{on} \quad \partial\Omega.
\end{align*}
By the divergence theorem, Poincar\'e's inequality, and the fact that $\int_\Omega\tilde{\rho}dx=0$, we have
\begin{align*}
    \frac{dV_1}{dt} & =\int_\Omega-\alpha(\nabla\tilde{\rho})^2+\rho\nabla\tilde{\rho}\cdot(\tilde{v}+v_a)dx\\
    & \leq\int_\Omega-\alpha(\nabla\tilde{\rho})^2+\|\nabla\tilde{\rho}\|\|\rho(\tilde{v}+v_a)\|dx\\
    & \leq-(1-c_1)\alpha_{\inf}\|\nabla\tilde{\rho}\|_{L^2}^2-c_1\alpha_{\inf}\|\nabla\tilde{\rho}\|_{L^2}^2 \\
    & \quad+\|\nabla\tilde{\rho}\|_{L^2}\|\rho(\tilde{v}+v_a)\|_{L^2}\\
    & \leq-(1-c_1)c_2^2\alpha_{\inf}\|\tilde{\rho}\|_{L^2}^2-c_1c_2\alpha_{\inf}\|\nabla\tilde{\rho}\|_{L^2}\|\tilde{\rho}\|_{L^2}\\
    & \quad+\|\nabla\tilde{\rho}\|_{L^2}\|\rho(\tilde{v}+v_a)\|_{L^2},
\end{align*}
where $\alpha_{\inf}=\inf_{(x,t)}\alpha(x,t)$, $c_1\in(0,1)$, and $c_2>0$ is the constant from Poincar\'e's inequality.
Since $\sup_{x,t}{\rho}<\infty$, there exists a positive constant $c_3$ such that $\|\rho(\tilde{v}+v_a)\|_{L^2}\leq c_3\|\tilde{v}+v_a\|_{L^2}$.
Hence, if
\begin{align*}
    \|\tilde{\rho}\|_{L^2}\geq\frac{c_3}{c_1c_2\alpha_{\inf}}\|\tilde{v}+v_a\|_{L^2},
\end{align*}
then we have
\begin{align*}
    \frac{dV_1}{dt}\leq-(1-c_1)c_2^2\alpha_{\inf}\|\tilde{\rho}\|_{L^2}^2.
\end{align*}
By Theorem \ref{thm:ISS-Lyapunov function}, $\|\tilde{\rho}\|_{L^2(\Omega)}$ is ISS to $\|\tilde{v}+v_a\|_{L^2(\Omega)}$.
\end{proof}

This theorem ensures that the density convergence error $\|\tilde{\rho}\|_{L^2(\Omega)}$ always remains bounded by a positive function of $\|\tilde{v}+v_a\|_{L^2(\Omega)}$ and converges to 0 if $\|\tilde{v}+v_a\|_{L^2(\Omega)}$ vanishes.

\subsection{Backstepping design: step II}
In step II, we design $\{u_i(t),\omega_i(t)\}_{i=1}^{n}$ for the robots such that $(\tilde{\rho},\tilde{v})\to0$ based on the results in step I.
We assume $v_a\equiv0$ since its effect can be studied in a similar way as in Theorem \ref{thm:step I ISS}.
Consider an augmented Lyapunov function:
\begin{align}\label{eq:augmented Lyapunov}
    V_2(t)=\int_\Omega\frac{1}{2}\tilde{\rho}(x,t)^2+\frac{1}{2}\tilde{v}(x,t)^T\tilde{v}(x,t)dx.
\end{align}
By the divergence theorem and Theorem \ref{thm:step I convergence}, we have 
\begin{align*}
    \frac{dV_2}{dt}= & \int_\Omega\tilde{\rho}\nabla\cdot(\alpha\nabla\tilde{\rho}-\tilde{v}\rho)+\tilde{v}^T(\partial_tv_r-\partial_tv_d)dx \\
    = & \int_\Omega-\alpha\|\nabla\tilde{\rho}\|^2+\tilde{v}^T(\rho\nabla\tilde{\rho}+\partial_tv_r-\partial_tv_d)dx \\
    \leq & \int_\Omega-k_p\tilde{\rho}^2+\tilde{v}^T(\rho\nabla\tilde{\rho}+\partial_tv_r-\partial_tv_d)dx \\
    = & -\int_\Omega k_p\tilde{\rho}^2dx+\int_\Omega\tilde{v}^T(\rho\nabla\tilde{\rho}-\partial_tv_d)dx \\
    & +\sum_{i=1}^n\Big(-\int_\Omega\tilde{v}^TK_\xi^iu_idx+\int_\Omega\tilde{v}^TK_\theta^i\omega_idx\Big).
\end{align*}

The remaining problem is to design $\{u_i(t),\omega_i(t)\}_{i=1}^{n}$ such that $\frac{dV_2}{dt}$ becomes negative definite.
Denote by $x^{(l)}$ the $l$-th element of vector $x$.
Then we pick
\begin{align}\label{eq:u and omega}
\begin{split}
    u_i^{(l)} & =k_{ui}\int_\Omega(\tilde{v}^TK_\xi^i)^{(l)}dx+\frac{\beta_{il}\int_\Omega\tilde{v}^T(\rho\nabla\tilde{\rho}-\partial_tv_d)dx}{\int_\Omega(\tilde{v}^TK_\xi^i)^{(l)}dx} \\
    \omega_i & =-k_{\omega i}\int_\Omega\tilde{v}^TK_\theta^idx-\frac{\gamma_{i}\int_\Omega\tilde{v}^T(\rho\nabla\tilde{\rho}-\partial_tv_d)dx}{\int_\Omega\tilde{v}^TK_\theta^idx},
\end{split}
\end{align}
where the scalars $\{k_{ui}(t),k_{\omega i}(t)\}$ satisfy
\begin{align*}
    \sum_{i=1}^nk_{ui}(t)>0 \text{ and } \sum_{i=1}^nk_{\omega i}(t)>0, \forall t
\end{align*}
and the scalars $\{\beta_{il}(t),\gamma_{i}(t)\}$ satisfy
\begin{align}\label{eq:control parameter constraints}
\begin{cases}
    \beta_{il}=0, \text{ if } \int_\Omega(\tilde{v}^TK_\xi^i)^{(l)}dx=0 \\
    \gamma_i=0, \text{ if } \int_\Omega\tilde{v}^TK_\theta^idx=0\\
    \sum_{i,l}\beta_{il}(t)+\gamma_{i}(t)=1, \forall t.
\end{cases}
\end{align}
Since the columns of $K_\xi^i$ and $K_\theta^i$ are linearly independent, at least one of the two integrals $\int_\Omega(\tilde{v}^TK_\xi^i)^{(l)}dx$ and $\int_\Omega\tilde{v}^TK_\theta^idx$ would be nonzero unless $\tilde{v}=0$.
In other words, the conditions \eqref{eq:control parameter constraints} always have a solution as long as $\tilde{v}\neq0$.
Substituting \eqref{eq:u and omega} into $\frac{dV_2}{dt}$, we obtain
\begin{align*}
    \frac{dV_2}{dt}\leq\int_\Omega-k_p\tilde{\rho}^2-\sum_{i=1}^n\big(k_{ui}\|\tilde{v}^TK_\xi^i\|^2+k_{\omega i}\|\tilde{v}^TK_\theta^i\|^2\big)dx.
\end{align*}
We thus obtain the following convergence result.

\begin{theorem}\label{thm:step II convergence}
Consider the complete system \eqref{eq:density dynamics} and \eqref{eq:navigation velocity dynamics}. 
Assume $v_a\equiv0$.
Let $\{u_i(t),\omega_i(t)\}_{i=1}^{n}$ be given by \eqref{eq:u and omega} where $v_d$ is given by \eqref{eq:vd}.
Then $(\|\tilde{\rho}(x,t)\|_{L^2(\Omega)},\|\tilde{v}(x,t)\|_{L^2(\Omega)})\to0$ exponentially and \eqref{eq:augmented Lyapunov} is a Lyapunov certificate.
\end{theorem}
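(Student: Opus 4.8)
The plan is to promote the differential inequality for $\frac{dV_2}{dt}$ that is already established just above the statement into an exponential decay estimate on $V_2$, and then read off convergence of both $\|\tilde{\rho}\|_{L^2(\Omega)}$ and $\|\tilde{v}\|_{L^2(\Omega)}$. Combining the divergence theorem, the zero-flux boundary condition, Theorem~\ref{thm:step I convergence}, and the controller choice \eqref{eq:u and omega} has already yielded
\begin{align*}
\frac{dV_2}{dt}\leq\int_\Omega-k_p\tilde{\rho}^2-\sum_{i=1}^n\big(k_{ui}\|\tilde{v}^TK_\xi^i\|^2+k_{\omega i}\|\tilde{v}^TK_\theta^i\|^2\big)dx,
\end{align*}
so the entire remaining task is to show that the right-hand side is dominated by $-cV_2$ for some constant $c>0$.

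First I would dispose of the density term trivially, $-\int_\Omega k_p\tilde{\rho}^2\,dx=-2k_p\int_\Omega\frac12\tilde{\rho}^2\,dx$. The substantive step is a pointwise lower bound on the velocity quadratic form. Using the explicit kernel $K(\xi,\theta)=\bar{K}(\xi)(\cos\theta,\sin\theta)^T$ and writing $\tilde{v}$ in the orthonormal frame $\{(\cos\theta_i,\sin\theta_i)^T,(-\sin\theta_i,\cos\theta_i)^T\}$, one obtains $\|\tilde{v}^TK_\xi^i\|^2=\|\nabla\bar{K}(x-R_i)\|^2 p_i^2$ and $\|\tilde{v}^TK_\theta^i\|^2=\bar{K}(x-R_i)^2 q_i^2$ with $p_i^2+q_i^2=\|\tilde{v}\|^2$. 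If the (freely assignable) gains are chosen so that $k_{ui},k_{\omega i}$ stay bounded below by some $\underline{k}>0$, then since $R_i(t)\in\bar{\Omega}$ and $\theta_i(t)\in[0,2\pi]$ lie in a compact set and $\bar{K}>0$ everywhere, a compactness argument together with the spanning property of the columns of $[K_\xi^i\ \ K_\theta^i]$ noted in the text gives $\sum_{i=1}^n\big(k_{ui}\|\tilde{v}^TK_\xi^i\|^2+k_{\omega i}\|\tilde{v}^TK_\theta^i\|^2\big)\geq c_v\|\tilde{v}\|^2$ for a constant $c_v>0$. Integrating this and combining with the density term gives $\frac{dV_2}{dt}\leq-2\min\{k_p,c_v\}V_2$; Gr\"onwall's inequality then yields $V_2(t)\leq V_2(0)e^{-2\min\{k_p,c_v\}t}$, and since $V_2=\frac12\big(\|\tilde{\rho}\|_{L^2(\Omega)}^2+\|\tilde{v}\|_{L^2(\Omega)}^2\big)$, both norms converge to $0$ exponentially and $V_2$ is the claimed Lyapunov certificate.

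The main obstacle is precisely the uniform positive-definiteness constant $c_v>0$. Because $\bar{K}>0$ everywhere while $\nabla\bar{K}(x-R_i)$ vanishes at $x=R_i$, a single robot with a fixed sign orientation controls only one scalar projection of $\tilde{v}$ pointwise, so the lower bound genuinely needs the robots to collectively excite both components of $\tilde{v}$. Concretely, it suffices that the sign orientations $\{\theta_i(t)\}$ are not all parallel, uniformly in $t$, so that $\sum_{i}\bar{K}(x-R_i)^2(-\sin\theta_i,\cos\theta_i)^T(-\sin\theta_i,\cos\theta_i)$ is uniformly positive definite (which already gives $c_v>0$ without even using the $K_\xi^i$ terms); this is exactly the "sufficient robots" hypothesis alluded to in the introduction. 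If instead one only assumes $\sum_i k_{ui}>0$ and $\sum_i k_{\omega i}>0$ as in the controller specification, the same estimate still shows $\frac{dV_2}{dt}<0$ whenever $(\tilde{\rho},\tilde{v})\neq0$ (the controls \eqref{eq:u and omega} being well defined there), which yields asymptotic convergence by a LaSalle/Barbalat argument, with the exponential rate recovered under the stronger excitation condition.
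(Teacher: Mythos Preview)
The paper's own argument stops at the displayed differential inequality and then simply states the theorem; it supplies no step showing that the right-hand side is bounded by $-cV_2$ for some $c>0$. Your proposal therefore goes well beyond the paper: the decomposition of $\tilde{v}$ in the rotated frame $\{(\cos\theta_i,\sin\theta_i),(-\sin\theta_i,\cos\theta_i)\}$ is correct, and your recognition that a uniform constant $c_v>0$ requires an excitation hypothesis on the orientations $\{\theta_i(t)\}$ (not all parallel, uniformly in $t$) is sharper than anything the paper provides. In that sense you are filling a gap the paper leaves open rather than reproducing its proof.

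One genuine issue, however, affects both the paper's displayed inequality and your pointwise route. Since $u_i(t),\omega_i(t)$ are independent of $x$, substituting \eqref{eq:u and omega} actually yields \emph{squares of integrals},
\[
-\sum_{i=1}^n\Big(k_{ui}\Big\|\int_\Omega\tilde{v}^TK_\xi^i\,dx\Big\|^2+k_{\omega i}\Big(\int_\Omega\tilde{v}^TK_\theta^i\,dx\Big)^2\Big),
\]
not the integral of pointwise squares $\int_\Omega\sum_i\big(k_{ui}\|\tilde{v}^TK_\xi^i\|^2+k_{\omega i}\|\tilde{v}^TK_\theta^i\|^2\big)dx$ written in the paper. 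The former involves only $3n$ scalar linear functionals of the infinite-dimensional field $\tilde{v}$, so it cannot dominate $c_v\|\tilde{v}\|_{L^2}^2$ for arbitrary $\tilde{v}$, regardless of how the $\theta_i$ are arranged. Your pointwise lower bound is therefore valid for the inequality as printed, but that inequality does not follow from the controller substitution. Your fallback LaSalle/Barbalat argument still gives asymptotic convergence from the correct (squares-of-integrals) inequality, but the claimed exponential decay of $\|\tilde{v}\|_{L^2}$ is not established by either the paper's text or your argument as written---which is consistent with the under-actuation caveat the paper itself raises in its Discussion.
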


As mentioned at the beginning, the robot control laws \eqref{eq:u and omega} are simultaneously density feedback and state feedback because they explicitly depend on the real-time density $\rho$ and the robot states $\{R_i,\theta_i\}_{i=1}^n$.
Using the crowd density to represent the crowd behavior addresses the scalability issue of large human populations, and the backstepping design based on density feedback ensures closed-loop stability and robustness to disturbances such as the avoidance velocity $v_a$.

\subsection{Discussion}

We discuss some practical issues related to the number of robots.
Theoretically, even one robot is enough to stabilize the crowd density.
This is because we allow arbitrarily large values for $\{u_i,\omega_i\}$.
To facilitate the discussion, for a vector-valued function $f(x)$ on $\Omega$ and a small constant $\epsilon>0$, define
\begin{align*}
    M_\epsilon(f)=\{x\in\Omega\mid \|f(x)\|>\epsilon\},
\end{align*}
which represents the set where the mass of $f(x)$ concentrates.
Take $u_i$ as an example.
If $M_\epsilon(\tilde{v})\cap M_\epsilon(K_\xi^i)=\emptyset$, then $u_i$ would be very large according to \eqref{eq:u and omega}.
This means that the velocity error concentrates on some regions where there are no robots.
In this case, the control law \eqref{eq:u and omega} may keep generating large velocities $\{u_i\}$ that instantly drive the robots to new positions.
To avoid large velocities, we can impose more constraints on the selection of $\{\beta_{il},\gamma_{i}\}$, such as requiring $\beta_{il}=0$ if $\int_\Omega(\tilde{v}^TK_\xi^i)^{(l)}dx<\epsilon$.
In this case, however, \eqref{eq:control parameter constraints} may not have a solution and $\tilde{v}$ may not converge to 0.
This is essentially caused by the lack of robots or under-actuation.
Nevertheless, we still ensure that the density error $\|\tilde{\rho}(x,t)\|_{L^2(\Omega)}$ remains bounded according to Theorem \ref{thm:step I ISS}.

The minimum number of robots depends on many factors including $\rho_*$, $\bar{K}$, $v_d$ and the bounds for $\{u_i,\omega_i\}$, which is a subject of continuing work. 
A rough lower estimate for $n$ is that when the robots are regularly distributed in $\Omega$, we have
\begin{align*}
    \bigcup_{i=1}^nM_\epsilon(v_{r_i})=\Omega,
\end{align*}
for a sufficiently large $\epsilon$, i.e., their range of influence can cover the whole workspace.

\section{Agent-based simulations}
\label{section:simulation}

In this section, we present two agent-based simulation studies to validate the control algorithms presented above.

The first simulation is based on MATLAB.
All humans and robots are treated as mass points whose motions are simulated according to \eqref{eq:human model} and \eqref{eq:robot model} using Euler's method ($dt=0.01$).
The robots generate navigation velocity fields according to \eqref{eq:navigation velocity} which are used to update the humans' positions.
This setup exactly satisfies the theoretical assumptions.

The second simulation is based on the Unity game engine, where all humans and robots occupy certain small areas.
Collision avoidance is automatic.
The robots' motions are still simulated according to \eqref{eq:robot model}, but the humans continuously make decisions based on their local views; see Fig. \ref{fig:human}.
Specifically, a person moves according to the robot's instruction if a robot appears in his/her local view, and moves randomly otherwise.
The control algorithms are implemented in MATLAB.
An interface between Unity and MATLAB is created such that MATLAB extracts the states of all humans and robots from Unity, computes the robot controllers, and sends those commands back to Unity in real time.
In summary, this setup does not exactly match the theoretical assumptions.
It is used to evaluate whether the evacuation algorithm possesses a certain level of robustness to real-world scenarios.

In both simulations, 200 humans and 16 robots are simulated in a square workspace.
Human positions are randomly initialized according to a uniform distribution.
The robots are regularly initialized to ensure sufficient coverage of the environment at the beginning, but the orientations of their signs are randomly assigned.
The objective is to drive all humans to the upper right corner.
Hence, the target density is designed to be a ``narrow'' Gaussian centered at that location; see Fig. \ref{fig:target density and error}.
The interactive potential $W$ is set to 0 for simplicity.
In the Unity environment, the map size is $150\times150$ meters and the view range of a human is 30 meters in radius.
We discretize the domain so that all functions (like $\rho$) are approximated by a $30\times30$ matrix.
We compute the robot controls according to \eqref{eq:u and omega} where $v_d$ is given by \eqref{eq:vd} and $\rho$ is estimated using KDE.
The involved gradient operation is approximated using the finite difference method.
We note that the PDE system \eqref{eq:density dynamics} is only used for control design and stability analysis, and we never need to solve \eqref{eq:density dynamics}.

The simulation results are given in Fig. \ref{fig:evacuation Matlab} and \ref{fig:evacuation Unity}.
It is observed that with our feedback control laws, the robots dynamically adjust their positions and arrow signs based on the real-time crowd state.
In the MATLAB simulation, the humans perfectly follow the generated navigation velocity fields and are guided to the target location.
The density convergence error is plotted in Fig. \ref{fig:target density and error} and verifies the success of the evacuation objective.
In the Unity simulation, the humans are guided only when they see a robot near them and eventually they are also directed to the target location, which suggests that the evacuation algorithm is robust to scenarios that are not precisely modeled.

\begin{figure}[hbt!]
    \centering
        \includegraphics[width=0.2\textwidth]{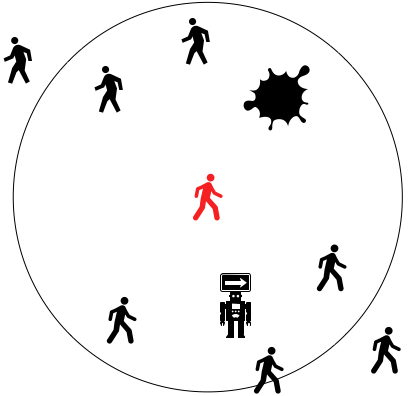}
    \caption{Local view of a human}
    \label{fig:human}
\end{figure}

\begin{figure}[hbt!]
    \centering
    \begin{subfigure}[b]{0.23\textwidth}
        \centering
        \includegraphics[width=\textwidth]{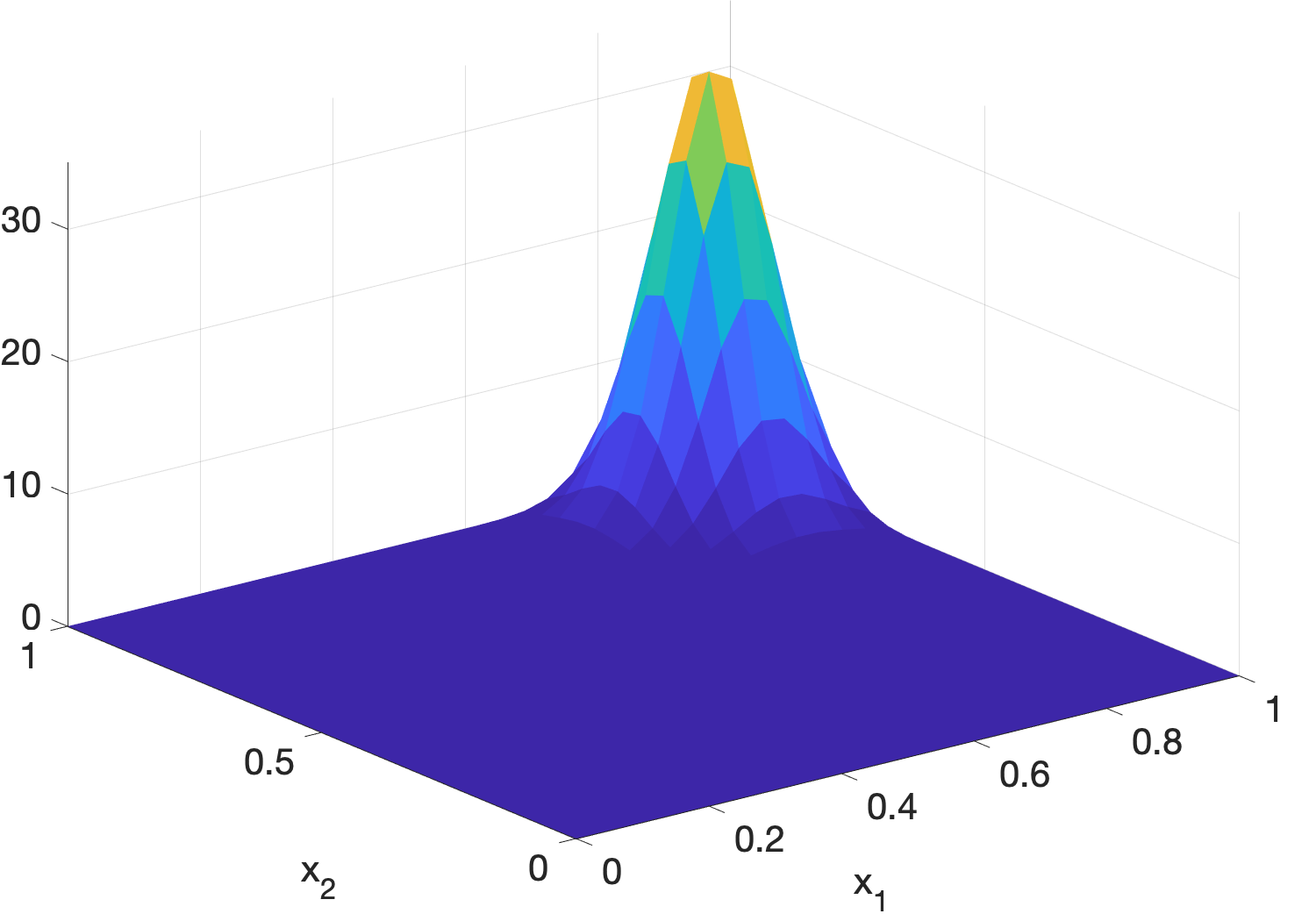}
    \end{subfigure}
    \begin{subfigure}[b]{0.23\textwidth}
        \centering
        \includegraphics[width=\textwidth]{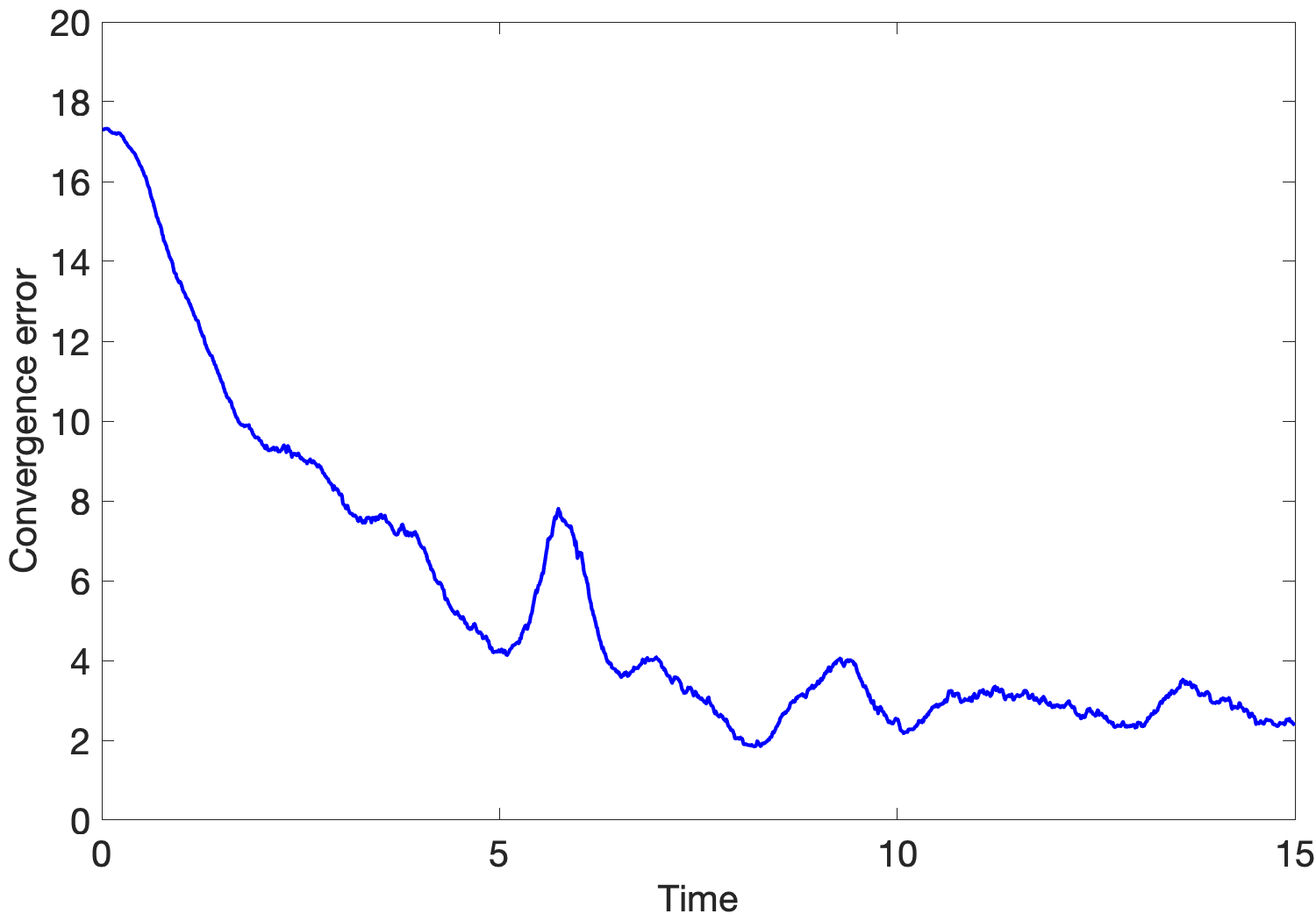}
    \end{subfigure}
    \caption{Left: target density. Right: convergence error.}
    \label{fig:target density and error}
\end{figure}

\begin{figure*}[t]
    \centering
    \begin{subfigure}[b]{0.22\textwidth}
        \centering
        \includegraphics[width=\textwidth]{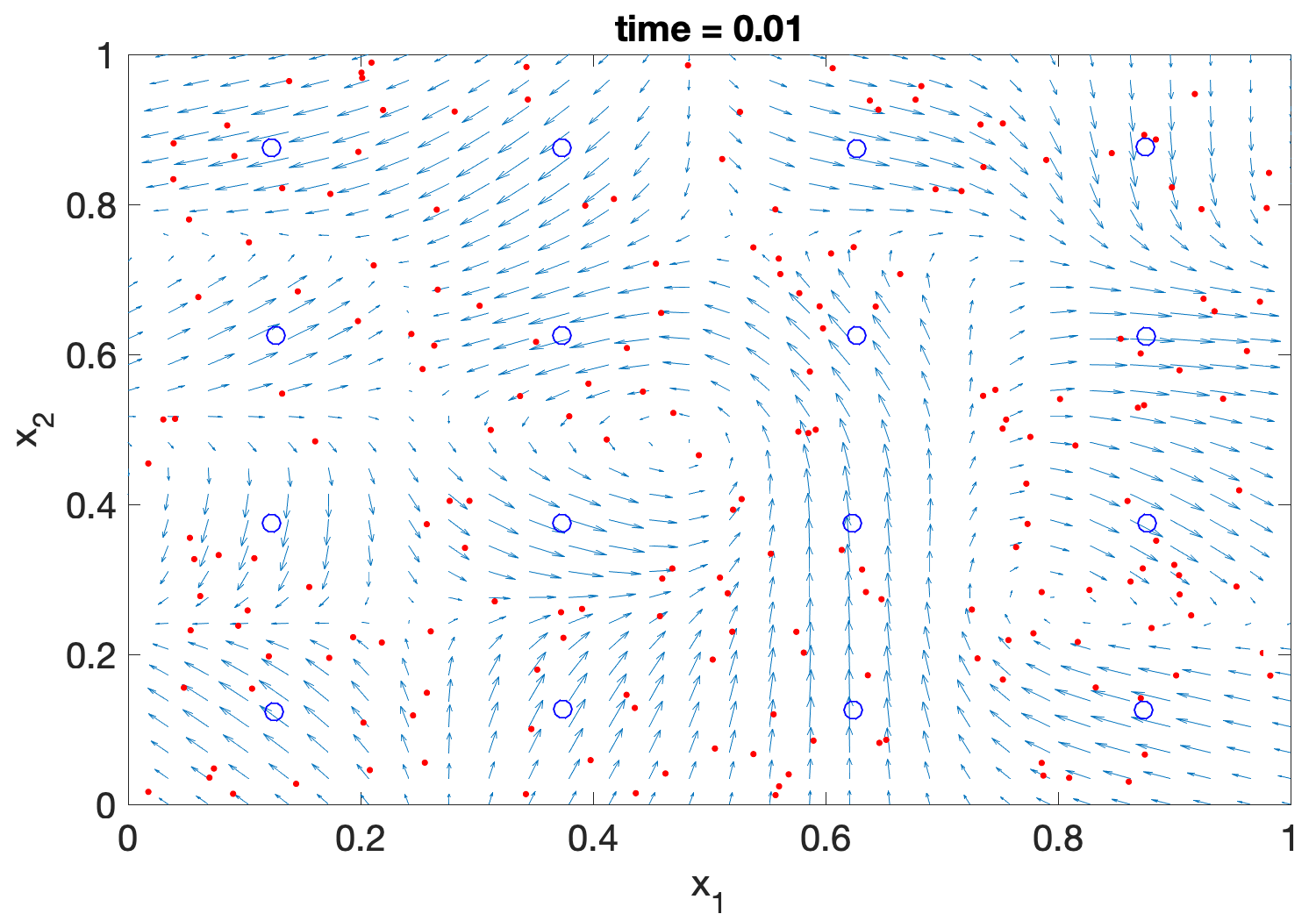}
    \end{subfigure}
    \begin{subfigure}[b]{0.22\textwidth}
        \centering
        \includegraphics[width=\textwidth]{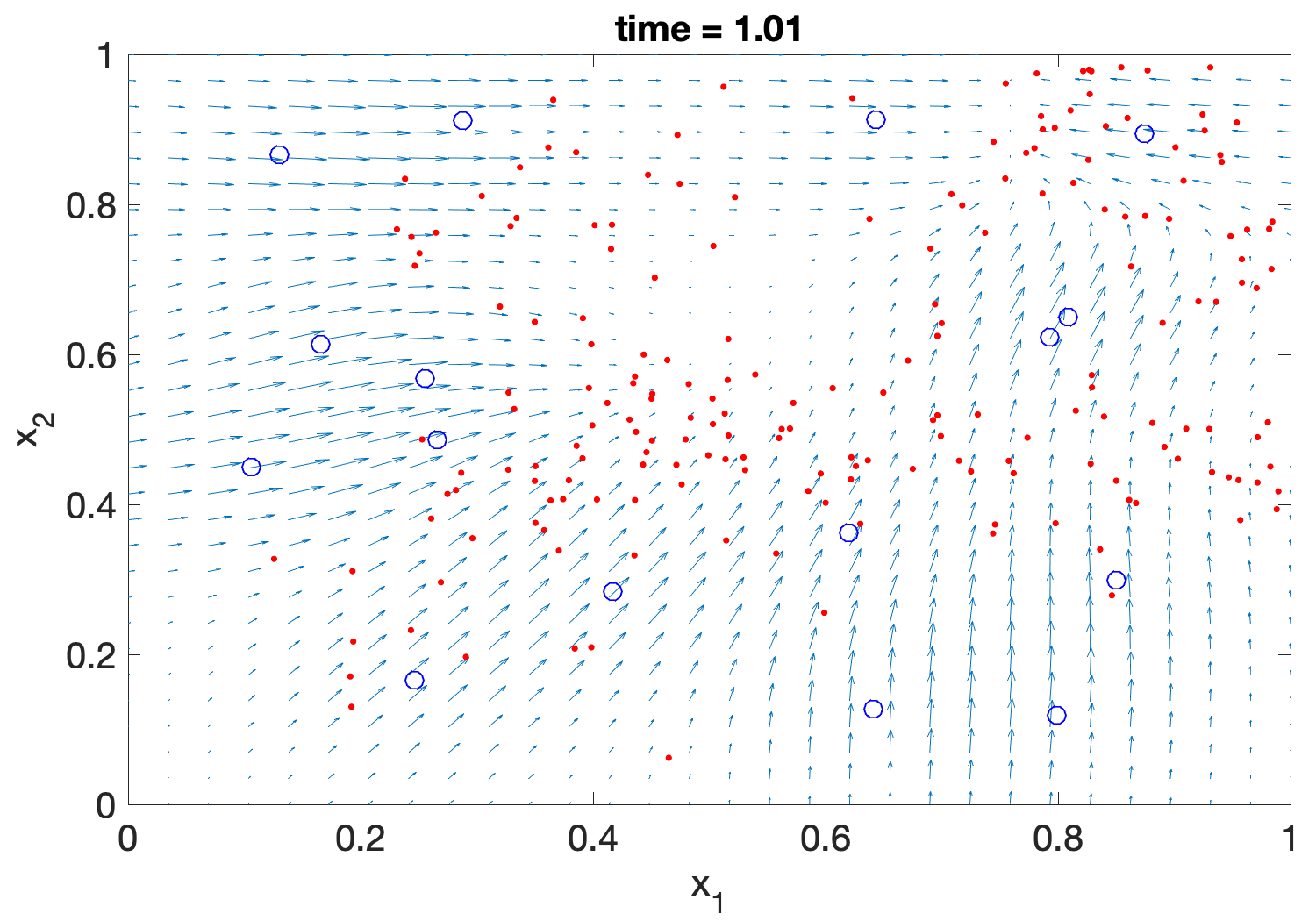}
    \end{subfigure}
    \begin{subfigure}[b]{0.22\textwidth}
        \centering
        \includegraphics[width=\textwidth]{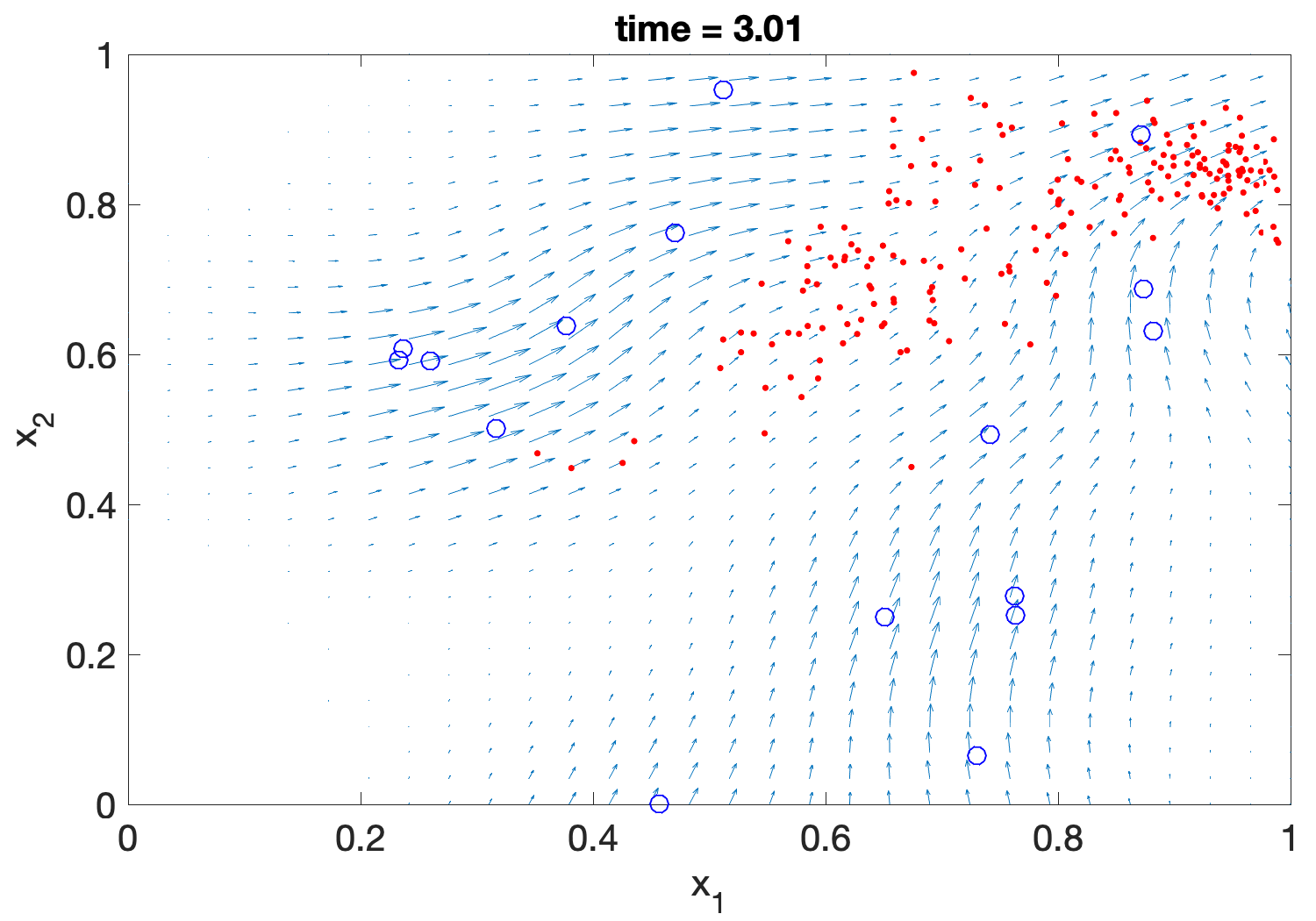}
    \end{subfigure}
    \begin{subfigure}[b]{0.22\textwidth}
        \centering
        \includegraphics[width=\textwidth]{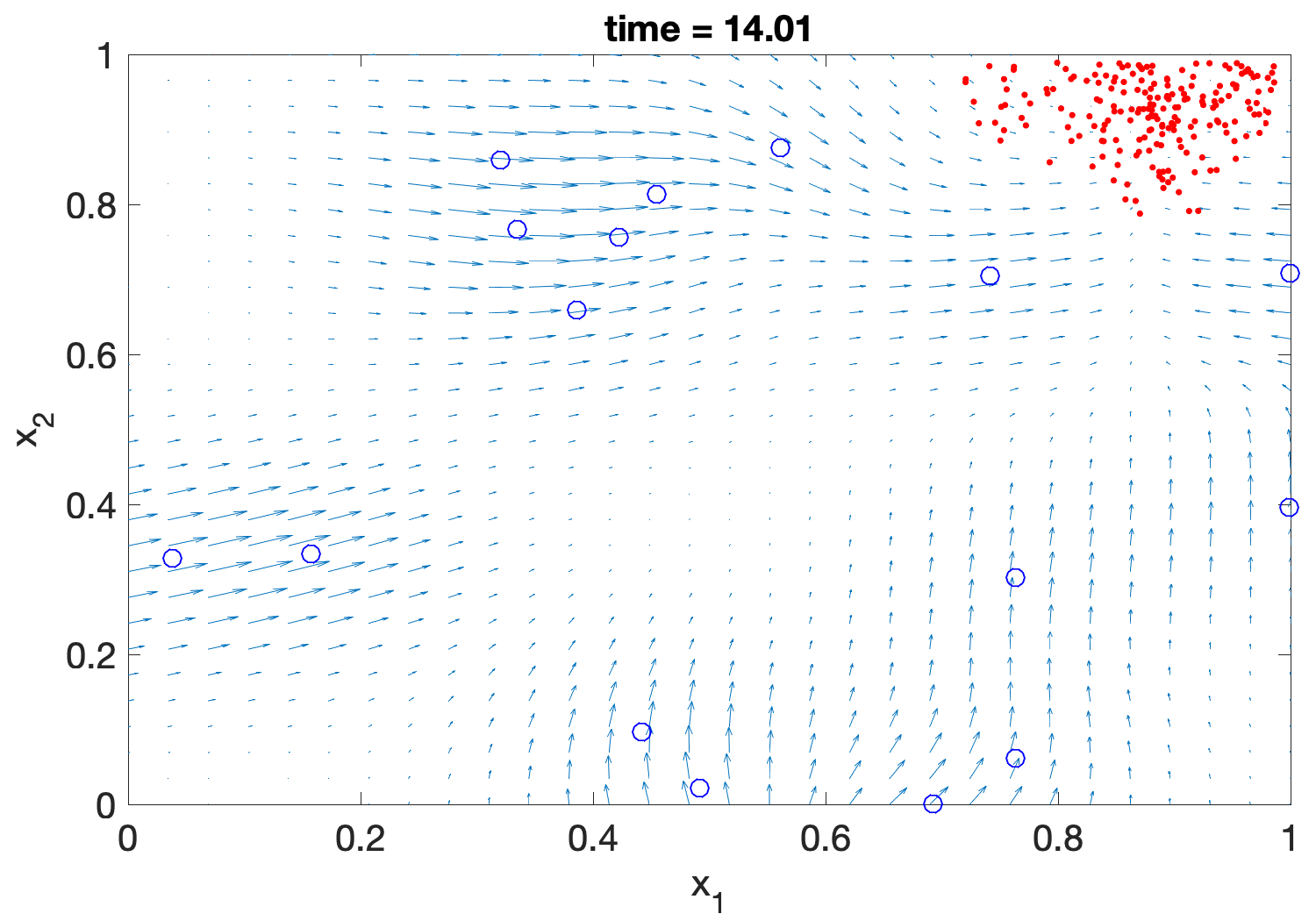}
    \end{subfigure}
    
    \begin{subfigure}[b]{0.22\textwidth}
        \centering
        \includegraphics[width=\textwidth]{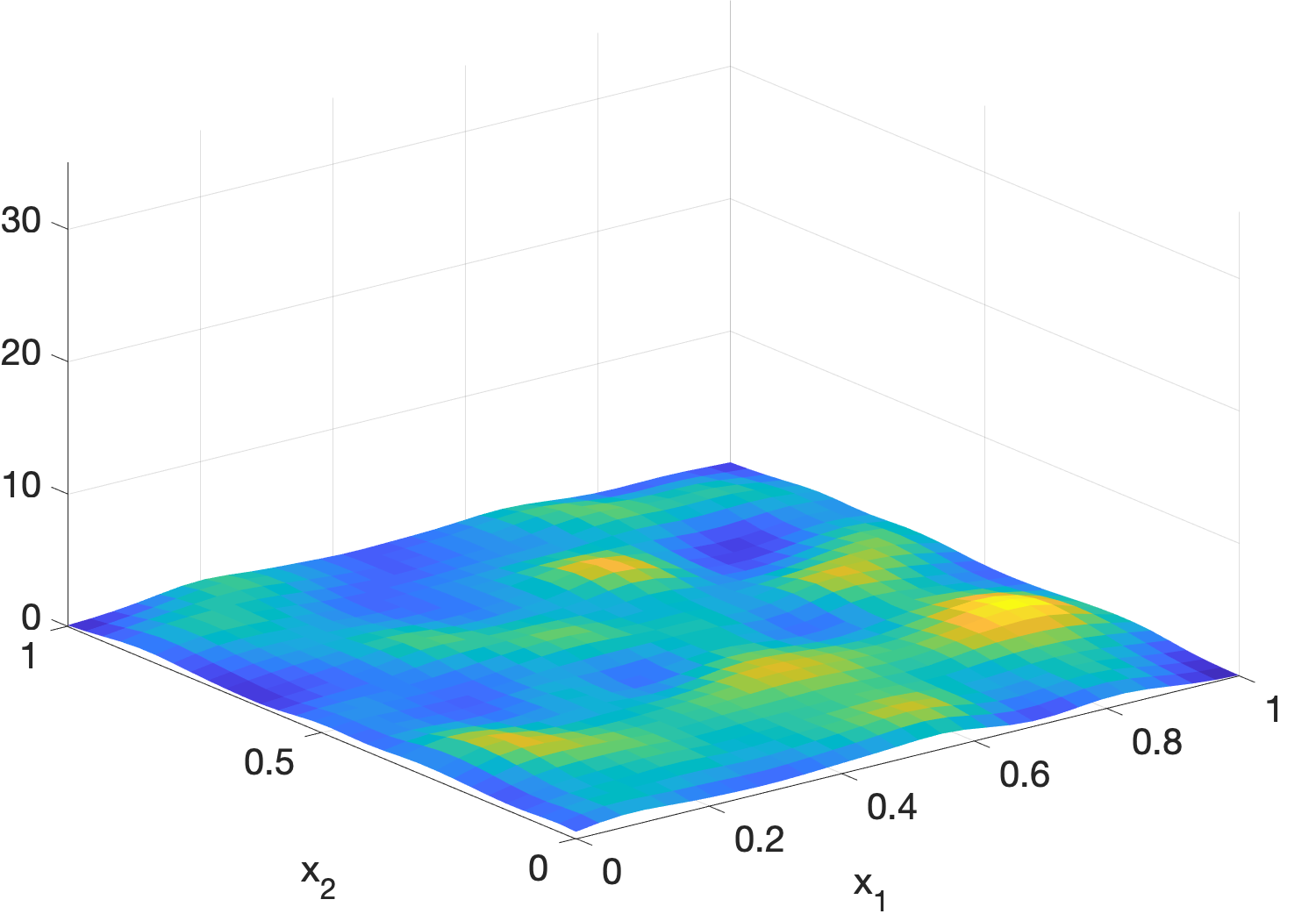}
    \end{subfigure}
    \begin{subfigure}[b]{0.22\textwidth}
        \centering
        \includegraphics[width=\textwidth]{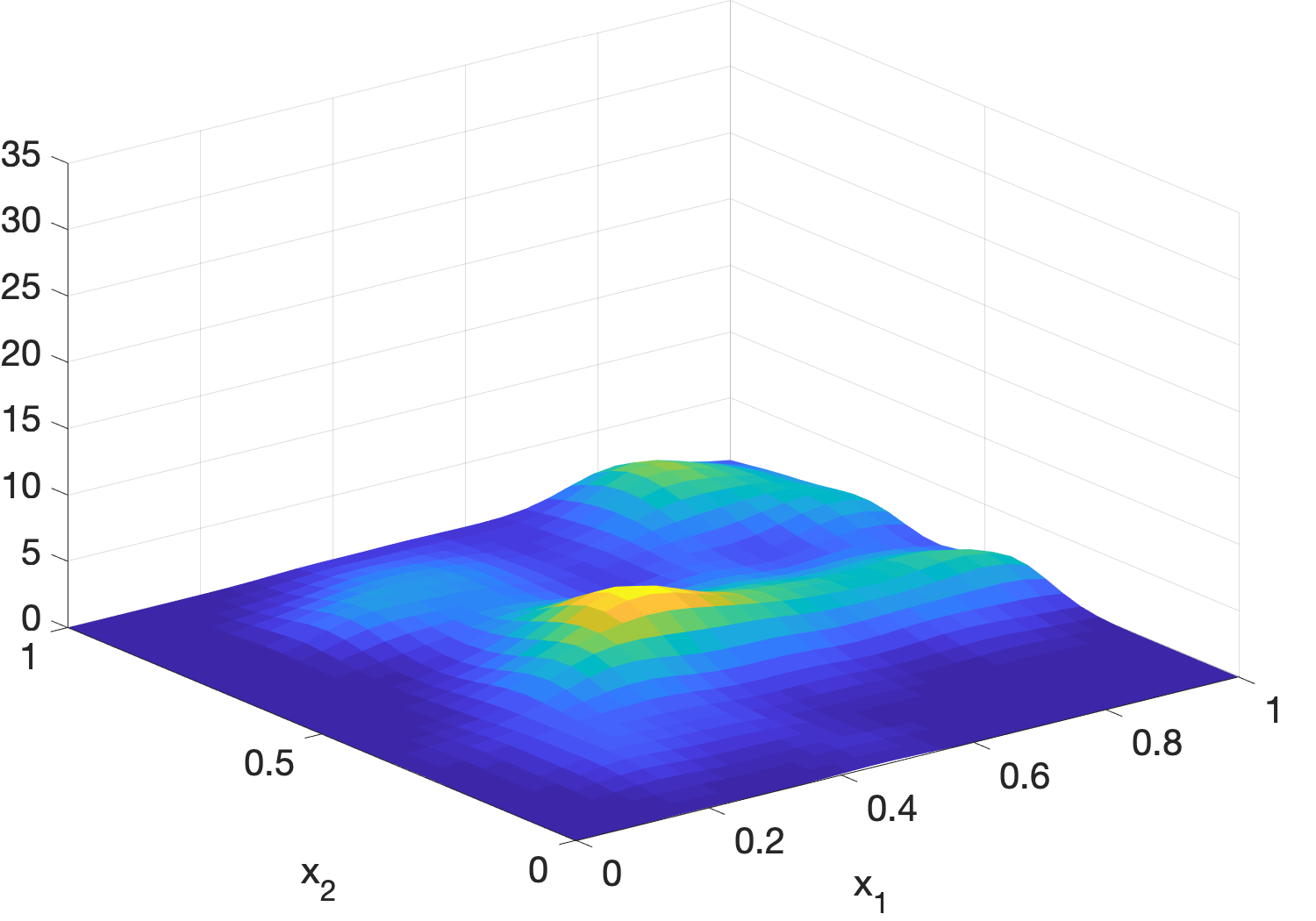}
    \end{subfigure}
    \begin{subfigure}[b]{0.22\textwidth}
        \centering
        \includegraphics[width=\textwidth]{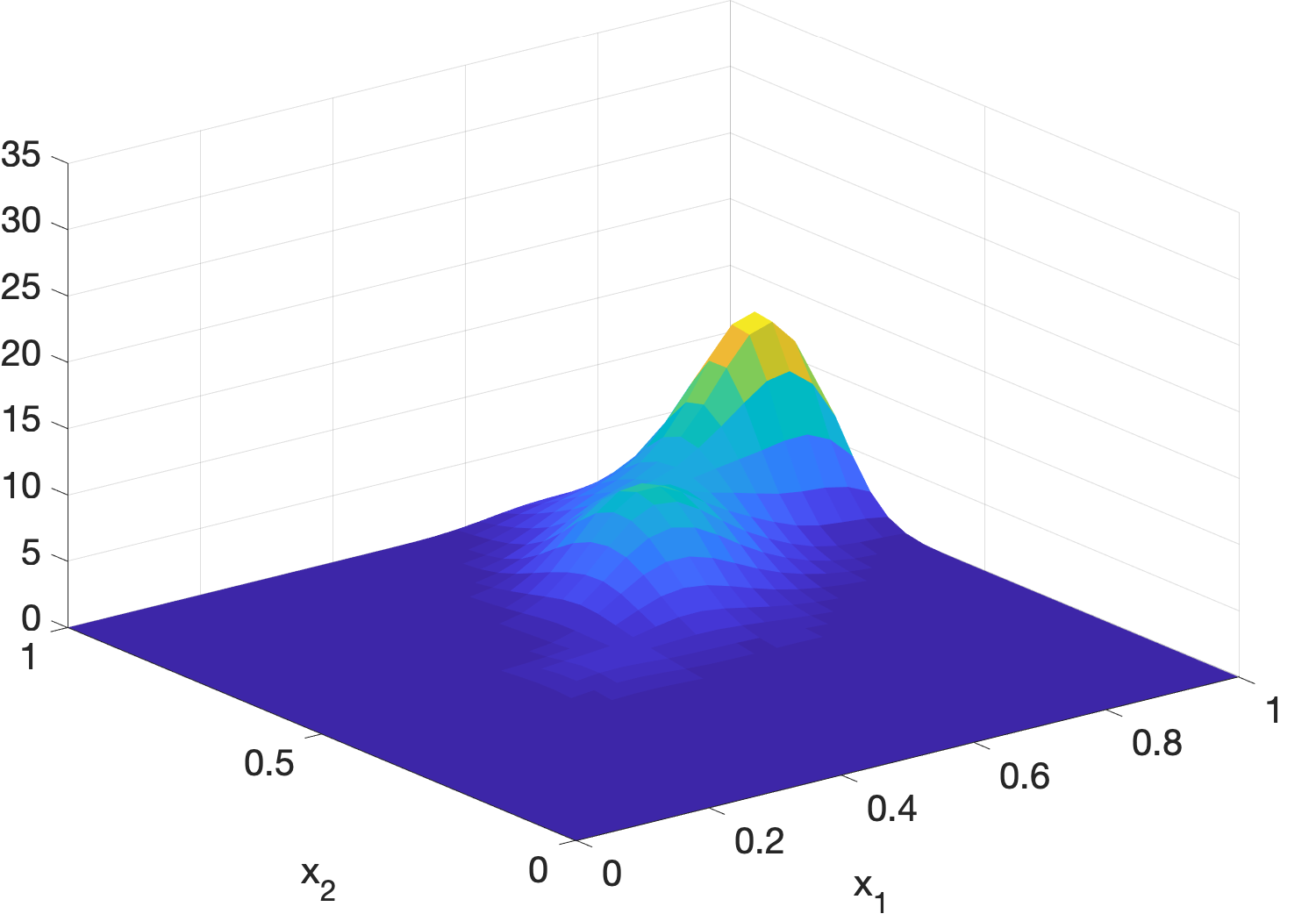}
    \end{subfigure}
    \begin{subfigure}[b]{0.22\textwidth}
        \centering
        \includegraphics[width=\textwidth]{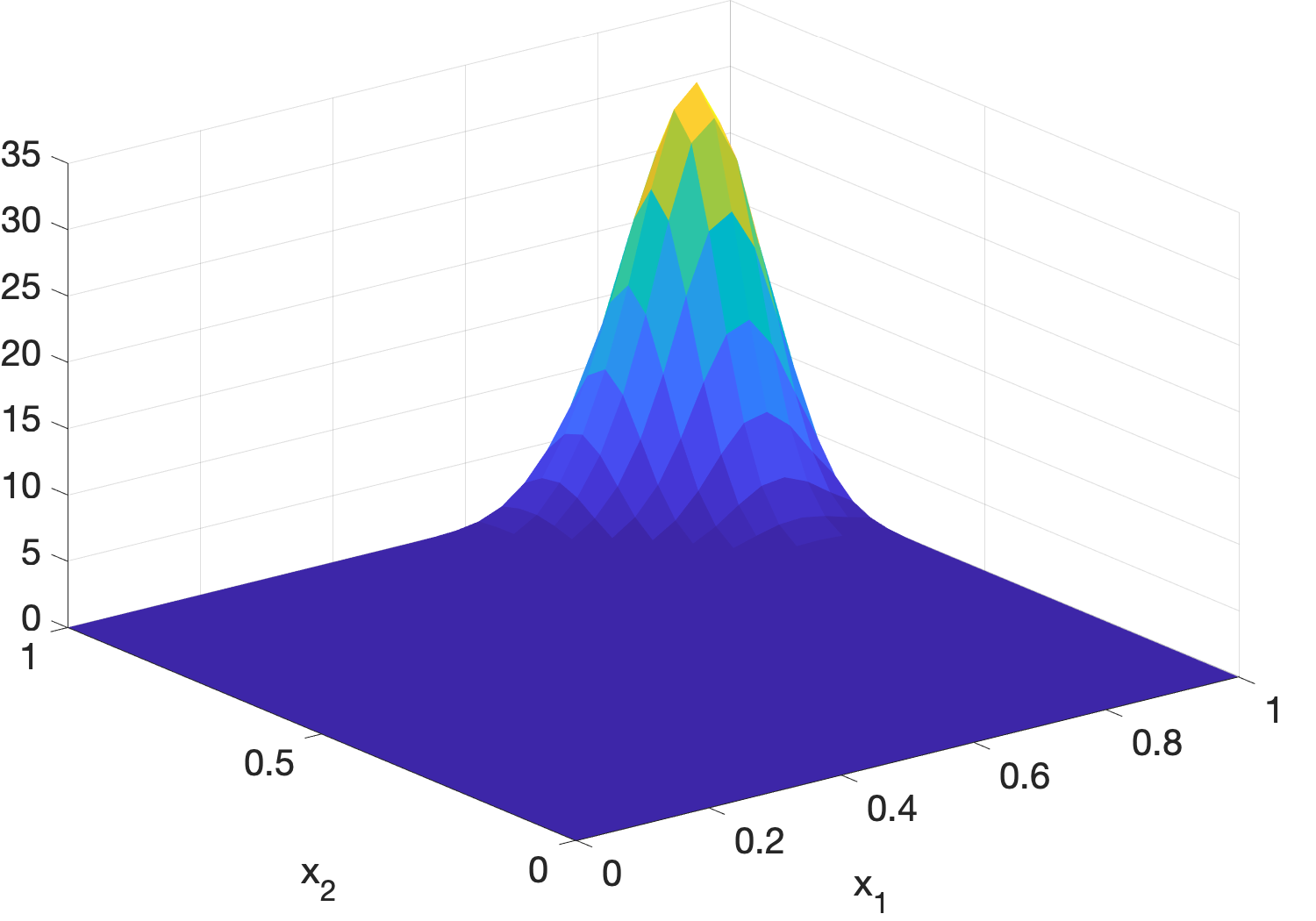}
    \end{subfigure}
    \caption{Simulated evacuation process in MATLAB. In the first row, the red dots represent the positions of humans. The blue circles represent the positions of robots. The arrows represent the navigation velocity field generated by the robots. The second row represents estimates of the current crowd density.}
    \label{fig:evacuation Matlab}
\end{figure*}

\begin{figure*}[t]
    \centering
    \begin{subfigure}[b]{0.22\textwidth}
        \centering
        \includegraphics[width=\textwidth]{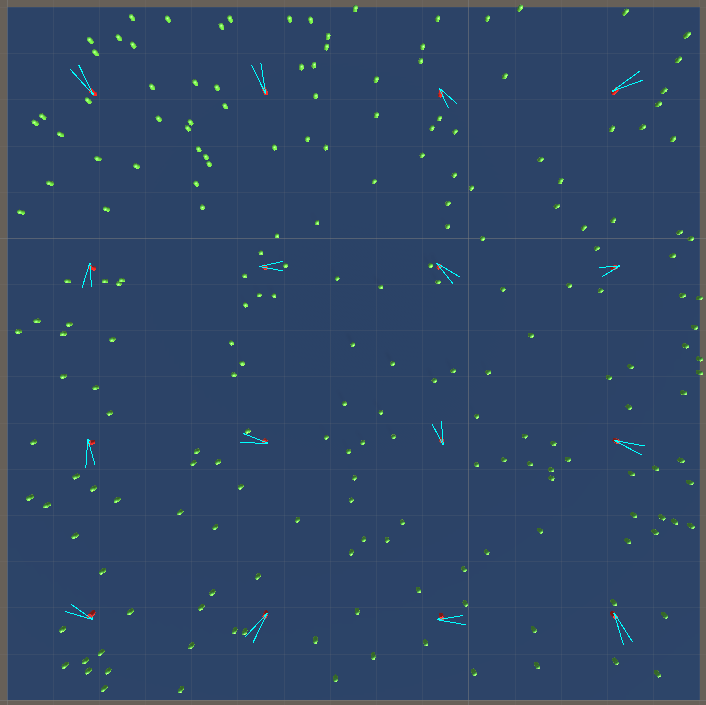}
    \end{subfigure}
    \begin{subfigure}[b]{0.22\textwidth}
        \centering
        \includegraphics[width=\textwidth]{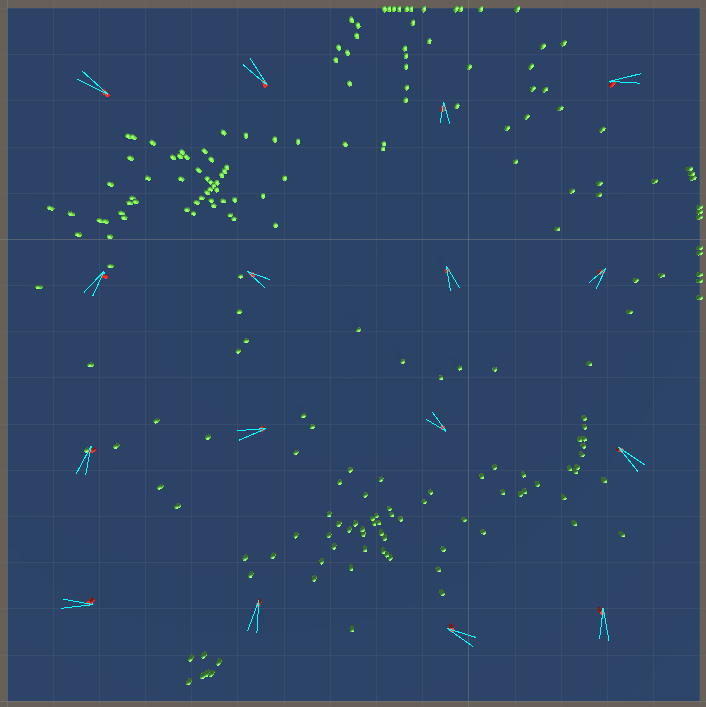}
    \end{subfigure}
    \begin{subfigure}[b]{0.22\textwidth}
        \centering
        \includegraphics[width=\textwidth]{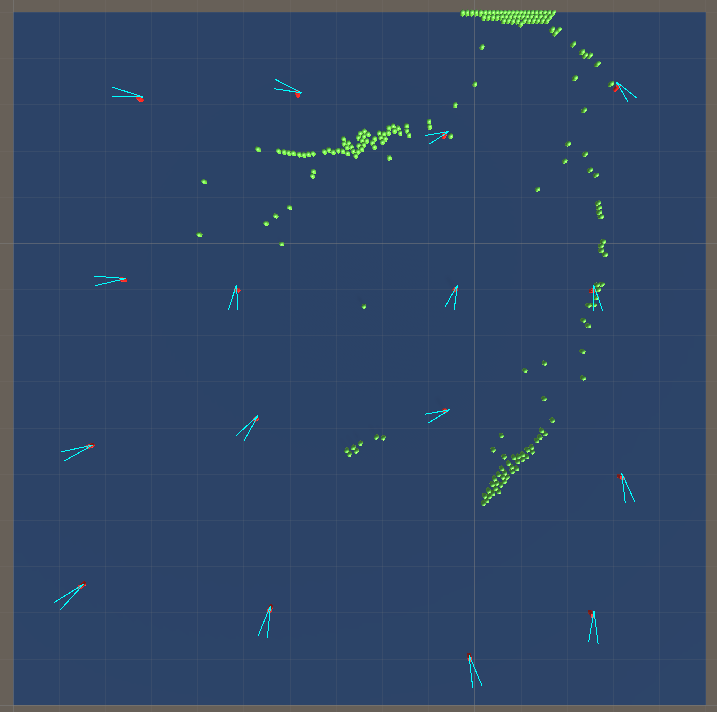}
    \end{subfigure}
    \begin{subfigure}[b]{0.22\textwidth}
        \centering
        \includegraphics[width=\textwidth]{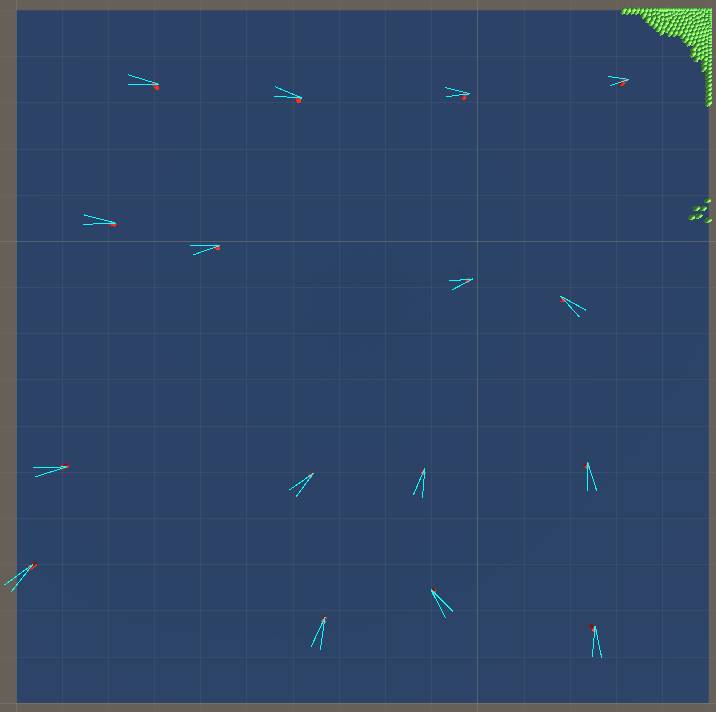}
    \end{subfigure}
    \caption{Simulated evacuation process in Unity. The green dots represent humans. The red dots represent robots. The direction of a robot's arrow sign is indicated by the $">"$ sign.}
    \label{fig:evacuation Unity}
\end{figure*}

\section{Conclusion}
\label{section:conclusion}
In this work, we studied a multi-robot-assisted crowd evacuation problem when humans significantly outnumber the robots.
We addressed the challenge by explicitly integrating human-robot interactions into the mean-field models and formulating the crowd evacuation problem as a density control problem.
Then, we designed density feedback control laws for the robots to dynamically adjust their positions such that their generated navigation velocity fields stabilize the crowd density, and proved the stability of the algorithms.
Our future work is to study the influence of the number of robots and decentralize the control algorithm.

\bibliographystyle{IEEEtran}
\bibliography{Refs_Tongjia,Refs_Jack,Refs_Mollik}

\end{document}